\newcommand{\eod}{{${}$\\}}
\newcommand{\cO}{{\mathcal O}}
\newcommand{\x}{{\mathbf x}}
\newcommand{\y}{{\mathbf y}}
\newcommand{\V}{{\mathcal V}}
\newcommand{\wt}{\mathbf{w}}
\newcommand{\bE}{{\mathbb E}}
\newcommand{\bbO}{{\mathbbm 1}}
\newcommand{\bR}{{\mathbb R}}
\newcommand{\cS}{{\mathcal S}}
\newcommand{\cT}{{\mathcal T}}
\newcommand{\cP}{{\mathcal P}}
\newcommand{\cHh}{{\mathcal H}}
\DeclareMathOperator*{\argmax}{argmax}
\newtheorem{thm}{Theorem}
\newtheorem{prop}[thm]{Proposition}
\newtheorem{cor}[thm]{Corollary}
\newtheorem{lem}[thm]{Lemma}
\newtheorem{defn}{Definition}
\newtheorem{assn}{Assumption}
\newtheorem{rem}{Remark}
\newtheorem{eg}{Example}
\begin{document} 

\title{Cortical prediction markets} 
\tnotetext[t1]{Accepted for publication at AAMAS 2014.}
\author{David Balduzzi}\ead{david.balduzzi@inf.ethz.ch}
\address{ETH Z\"urich, Switzerland}

\begin{abstract} 
We investigate cortical learning from the perspective of mechanism design. First, we show that discretizing standard models of neurons and synaptic plasticity leads to rational agents maximizing simple scoring rules. Second, our main result is that the scoring rules are \emph{proper}, implying that neurons faithfully encode expected utilities in their synaptic weights and encode high-scoring outcomes in their spikes. Third, with this foundation in hand, we propose a biologically plausible mechanism whereby neurons backpropagate incentives which allows them to optimize their usefulness to the rest of cortex. Finally, experiments show that networks that backpropagate incentives can learn simple tasks. 
\end{abstract}

\begin{keyword}incentives for cooperation, multiagent learning, biologically-inspired approaches, prediction markets
\end{keyword}

\maketitle 

\section{Introduction} 

How does the brain encode information about the environment into its structure \citep{stanley:13}? Inspired by recent work in prediction markets, this paper investigates cortical learning and the neural code from the perspective of mechanism design \citep{hanson:07, lambert:08, abernethy:11a, abernethy:12, abernethy:13}. To the best of our knowledge it is the first paper to do so.

We start in \S\ref{s:minimal} by modeling neurons as rational agents: that is, agents whose sole aim is to maximize the expected value of an objective function. To do so, we draw on a recent paper showing that \emph{discretizing} standard models of neuronal dynamics \citep{gerstner:02} and learning \citep{song:00} yields a threshold neuron with an online update rule that optimizes a simple objective \citep{bb:12}. By maximizing their objective function, neurons seek to optimally trade off rewards, depending on neuromodulatory signals such as dopamine, with costs, depending on resources expended on synaptic connections \citep{bob:13, bt:13}. 

However, it is not enough that neurons optimize locally. They should collectively converge on useful outcomes. The problem of how a global (cortical) optimization procedure can be implemented at a local (neuronal) level remains open. 

To tackle the problem we turn to mechanism design: How to incentivize populations of rational agents to produce desirable outcomes? 

An inspiring successful application of mechanism design is prediction markets, which aggregate the behavior of self-interested traders into accurate predictions of diverse real-world events \citep{berg:01, ledyard:09}. This has motivated research on payment schemes that encourage agents to trade in markets if the price distribution differs from their beliefs \citep{hanson:07}. Of particular interest are \emph{proper} scoring rules: payment schemes that incentivize rational agents to truthfully report their beliefs \citep{lambert:08}. 

Our next step, \S\ref{s:design}, is therefore to analyze neuronal objective functions \emph{as payment schemes}. This has implications in two directions. First, since the neuronal objective function decomposes as a sum over synapses, we model synapses as rational agents trading in a neuronal market, \S\ref{s:npm}. Second, we model neurons as rational agents trading in a cortical prediction market, \S\ref{s:cpm}.

Our main result, Theorem~\ref{t:proper}, establishes a striking connection between prediction markets and cortical learning: neuronal objective functions are proper scoring rules. The remainder of the paper applies two corollaries of Theorem~\ref{t:proper} to show that well-functioning neuronal markets form a foundation for a well-functioning cortical market -- thereby gluing together the two perspectives. 

Corollary~\ref{t:stdp_proper} shows that synaptic weights encode the utility expected after pre- and post- synaptic spikes. This partially answers the question posed earlier: ``How does the brain encode information about the environment into its structure?''

More importantly, the corollary provides a foundation for cooperative learning. Consider the following basic schema to incentivize rational agents to collaborate:

(i) each agent estimates its usefulness to other agents, 

(ii) incorporates the estimate into its reward function and 

(iii) thus maximizes its usefulness to the collective.\\
To implement the schema, neurons must estimate their usefulness. Corollary~\ref{t:stdp_proper} implies that synaptic weight $\wt_{ij}$ quantifies how useful spikes from $n_i$ are to $n_j$, when $n_j$ spikes. More generally, the sum of outgoing synaptic connections quantifies how useful a neuron's outputs are to the rest of the system. We therefore define the \emph{usefulness} of a neuron as, roughly, the sum of its downstream weights, \S\ref{s:bp}.

In line with the schema we then show, Corollary~\ref{t:feedback}, that incorporating feedback into reward functions causes neurons to (i) estimate their usefulness and (ii) maximize the estimate. This provides a new interpretation of a spike-based backpropagation scheme \citep{roelfsema:05} that is closely related to error-backpropagation \citep{rumelhart:86}. 

In short, well-functioning neuronal markets, with synapses faithfully reporting expected utilities, can be used to build well-functioning cortical markets. 

Finally, experiments in \S\ref{s:experiments} confirm our theoretical results.

\paragraph{Scope and related work}
A well-studied framework in neuroscience is based on the idea that neurons infer the probabilities of external events, which are encoded into probabilistic population codes, see e.g. \citep{Boerlin:2011ys}. By contrast, we emphasize decisions over inferences.  We are concerned with how neurons act, rather than what they infer. The two perspectives are related and it may turn out, as in prediction markets where prices can encode probabilities, that the population coding and mechanism design approaches lead to the same destination.

Note that our goal is to show methods from mechanism design can be fruitfully applied to fundamental questions in neuroscience. We do not advocate \emph{specifically} for the scoring rules described below. These were derived from standard, but simple, neurophysiological models. With additional work it should be possible to extend our results to more realistic models. 

This work is inspired by a striking connections that has recently been discovered between market scoring rules and no-regret learning \citep{chen:10}, and related work suggesting that carefully designed markets could be used to aggregate hypotheses generated by populations of learning algorithms \citep{lay:10, storkey:11, abernethy:11a}.

\section{A minimal model}
\label{s:minimal}

At first glance, the models developed by neuroscientists are quite different from the rational agents studied in game theory. To build a bridge we utilize recent work \emph{discretizing} a standard model from the neuroscience literature \citep{bb:12}.

\subsection{Discretized neurons}
Consider a system of $N$ binary neurons $\{n_j\}_{j=1}^N$. Let $\cO = \{0,1\}^N$ denote the set of possible states. Each neuron is connected to a subset of the system. Suppose neuron $n_j$ has $K_j\ll N$ synapses. We model the restriction of the total system state to the subset received by neuron $n_j$ with a mask projecting from $\{0,1\}^N$ to $\{0,1\}^{K_j}$
\begin{equation}
	\label{e:mask}
	\varphi_j:\cO\rightarrow \{0,1\}^{K_j}:\x=(x_1,\ldots, x_N)\mapsto (x_i)_{\{i|i\rightarrow j\}}.
\end{equation}
Neuron $n_j$ is equipped with a $K_j$-vector of synaptic weights, $\wt_j\in \cHh_j=\bR^{K_j}$. Given input $\x\in\cO$, the neuron outputs a 0 or 1 according to 
\begin{equation}
	\label{e:threshold}
	f_{\wt_j}(\x) := \begin{cases}
		1 & \text{if }\langle\wt_j,\varphi_j(\x)\rangle-\vartheta>0\\
		0 & \text{else}
	\end{cases}
\end{equation}
for some fixed $\vartheta$ constant across all neurons.

To simplify the exposition, we drop $\varphi_j$ from the notation and let $\cHh:=\bR^N$ denote the space of synaptic weights -- where synapses that do not physically exist are implicitly clamped to zero. Thus, we treat entire system states as inputs to a neuron -- when in fact the mask projects out most inputs.

\begin{defn}
	Suppose we have utility function $\mu:\cO\rightarrow \bR$. Following \citep{bb:12}, define \textbf{reward function}
\begin{equation}
	\label{e:reward}
	R(\x,\wt_j,\mu_j) 
	= \underbrace{\mu_j(\x)}_{\text{utility}}\cdot \underbrace{(\langle\wt_j,\x\rangle-\vartheta)}_{\text{margin}}\cdot \underbrace{f_{\wt_j}(\x)}_{\text{selectivity}}
\end{equation}
\end{defn}
Examples of utility functions are provided in \S\ref{s:utility} and \S\ref{s:feedback}.

\begin{rem}[notation for spikes]
	Note that $f_{\wt_j}(\x)$, $\x_j$, $\bbO_{\wt_j}$,  and $\bbO_j$ all denote the output of neuron $n_j$; emphasizing the function producing the output, that the output is also an input (one of many forming a vector) to other neurons, or the indicator-function aspect of the output respectively. 
	We use  $\bbO_{ij} := \bbO_i\cdot \bbO_j$ to indicate the cospiking of neurons $n_i$ and $n_j$.
\end{rem}

Ignoring costs for a moment, suppose neurons maximize $\bE_{(\x,\mu)\sim P}\Big[R(\x,\wt,\mu)\Big]$, where $P(\x,\mu)$ is the joint distribution on spiking inputs and neuromodulators. 

The reward function is continuously differentiable (in fact, linear) as a function of $\wt$ everywhere except at the kink $\langle\wt,\x\rangle=\vartheta$ where it is continuous but not differentiable. We can therefore perform gradient ascent to obtain synaptic updates
\begin{equation}
	\label{e:stdp}
	\Delta \wt_{ij} \propto \mu_j(\x)\cdot \x_i \cdot f_{\wt_j}(\x) = \mu_j(\x)\cdot \bbO_{ij}.
\end{equation}
In short, if $n_j$ receives input $\bbO_i$ and subsequently spikes $\bbO_j$, then synapse $i\rightarrow j$ is modified proportionally to $\mu_j(\x)$. The main theorem in \citep{bb:12} derives the above equations by discretizing standard models of neuronal dynamics and learning: 

\begin{thm}[discretized neurons, \citep{bb:12}]\label{t:limit}
 	The fast time constant limit of Gerstner's Spike Response Model \citep{gerstner:02} is \eqref{e:threshold}. Taking the fast time constant limit of STDP \citep{song:00} yields \eqref{e:stdp} with $\mu_j(\x)=1$. Finally, STDP is gradient ascent on a reward function whose limit is $\eqref{e:reward}$.
\end{thm}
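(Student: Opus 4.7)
The plan is to treat each of the three assertions separately, since each is a limit statement about a different continuous-time object. Throughout, the common technical device is to let the kernel timescales $\tau \to 0$ so that exponential (or otherwise decaying) response and learning kernels collapse to Dirac deltas, and coincidence of spike times becomes coincidence within the same discrete timestep.

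For the first claim, I start from the standard Spike Response Model, in which the membrane potential of neuron $n_j$ at time $t$ is
\begin{equation*}
u_j(t) = \sum_{i} \wt_{ij}\sum_{t_i^f < t} \epsilon(t-t_i^f) + \sum_{t_j^f<t}\eta(t-t_j^f),
\end{equation*}
and the neuron fires when $u_j(t)$ crosses the threshold $\vartheta$. Shrinking the time constants of the postsynaptic kernel $\epsilon$ and the refractory kernel $\eta$ to zero eliminates all temporal integration, so only presynaptic spikes occurring \emph{within} the current timestep contribute. The surviving sum is exactly $\langle\wt_j,\varphi_j(\x)\rangle$, and the firing condition reduces to \eqref{e:threshold}.

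For the second claim, I would write Song--Abbott STDP as the continuous update
\begin{equation*}
\dot{\wt}_{ij}(t) = A_+ \sum_{t_i^f, t_j^g} K_+(t_j^g - t_i^f)\, \bbO_i(t_i^f)\bbO_j(t_j^g) - A_- \sum K_-(t_i^f-t_j^g)\,\bbO_i(t_i^f)\bbO_j(t_j^g)
\end{equation*}
with exponential kernels $K_\pm$. Taking $K_\pm$ to delta functions forces the pre- and post-synaptic times to agree, which after integrating over one timestep leaves $\Delta\wt_{ij}\propto \bbO_i\bbO_j = \bbO_{ij}$, matching \eqref{e:stdp} with $\mu_j\equiv 1$. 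For the third claim, one differentiates \eqref{e:reward}. On the open set $\{\x : \langle \wt_j,\x\rangle>\vartheta\}$ the selectivity factor equals $1$, so $R = \mu_j(\x)(\langle\wt_j,\x\rangle-\vartheta)$ and $\nabla_{\wt_j} R = \mu_j(\x)\cdot \x$, i.e.\ $\partial R/\partial \wt_{ij} = \mu_j(\x)\,\bbO_{ij}$. On $\{\langle\wt_j,\x\rangle<\vartheta\}$ the reward is identically zero, matching the fact that STDP (in this minimal form) only updates synapses when the postsynaptic neuron fires. Thus the gradient-ascent trajectory coincides with \eqref{e:stdp}, and identifying the continuous-time reward whose fast-time-constant limit is \eqref{e:reward} is obtained by inserting smooth kernels into the margin and selectivity factors and sending their time constants to zero.

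The main obstacle is not the limits themselves, which are essentially elementary once the continuous models are written down, but the kink at $\langle\wt_j,\x\rangle=\vartheta$: the reward is only continuous (not $C^1$) there, the selectivity jumps, and gradient ascent has to be interpreted in a subdifferential or almost-everywhere sense. One must argue that the bad set has measure zero in $\wt$-space for generic inputs, so the prescribed update is a genuine (sub)gradient step; this is where care is needed in calling the scheme ``gradient ascent.'' A secondary subtlety is order-of-limits: one must verify that taking the kernel limits in the reward commutes with differentiating in $\wt$, which again follows because the only problematic locus is the measure-zero threshold set.
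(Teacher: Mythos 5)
First, note that the paper you are working from does not prove Theorem~\ref{t:limit} at all: it is imported verbatim from \citep{bb:12}, and the surrounding text only says that the cited work ``derives the above equations by discretizing standard models.'' So there is no in-paper proof to match against; your sketch can only be judged against the strategy of the reference, and in spirit it is the right one --- collapsing the SRM response kernels and the STDP learning windows to Dirac deltas so that temporal integration disappears and spike coincidence becomes coincidence within a single tic. Your treatment of the first claim (SRM $\to$ Eq.~\eqref{e:threshold}) is essentially complete at the level of rigor the statement demands.

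There are two genuine gaps in the remaining claims. For the second claim, the Song--Abbott rule has both a potentiation window and a depression window, and your delta limit concentrates \emph{both} on the coincidence event, leaving $\Delta\wt_{ij}\propto (A_+-A_-)\,\bbO_{ij}$; this only recovers Eq.~\eqref{e:stdp} if the depression term is dropped or relocated, and the paper is explicit that in this framework depression is not part of the reward at all but is replaced by the regularizer $A_\bullet(\wt)$ (see the discussion following Theorem~\ref{t:limit}). Your sketch never confronts that asymmetry, nor the hard weight bounds of the Song--Abbott model, so as written the limit does not land on the claimed equation. For the third claim, you have verified the easy direction --- that the gradient of the already-discretized reward \eqref{e:reward} reproduces the already-discretized update \eqref{e:stdp} away from the kink --- but the theorem asserts something stronger: that continuous-time STDP is itself gradient ascent on some continuous-time functional $R_\tau$ whose $\tau\to 0$ limit is \eqref{e:reward}. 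Exhibiting that $R_\tau$ and showing the limit commutes with differentiation is the actual content of the cited result; your proposal defers it to ``inserting smooth kernels into the margin and selectivity factors,'' which names the construction without carrying it out. Your remarks on the kink and the subdifferential interpretation are apt and worth keeping, but they address a secondary issue, not the missing construction.
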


Spike-timing dependent plasticity is prone to overpotentiation \citep{song:00}, leading to epileptic seizures. In the neuroscience literature, weights are typically controlled with a depotentation bias. We take an alternative approach, by introducing a \emph{regularizer} $A_\bullet(\wt)$, which quantifies the \emph{resource costs} incurred by high synaptic weights \citep{Hasenstaub:2010fk, bb:12, bob:13}.

The optimal weights are then computed according to 
\begin{align}
	\label{e:regmax}
	\wt^*_j & := \argmax_{\wt\in\cHh}\, \bE_{P}\Big[\cS_\bullet(\x;\wt)\Big]\\
	& := \argmax_{\wt\in\cHh}\, \bE_{P}\Big[R(\x,\wt,\mu) - A_\bullet(\wt)\Big]
\end{align}
where \emph{\textbf{scoring rule}} $\cS_\bullet(\x,\wt)$ balances rewards $R(\x,\wt,\mu)$ against costs $A_\bullet(\wt)$. We consider two standard regularizers taken from machine learning \citep{shalev:07} and a third, more biologically plausible, taken from \citep{bb:12}:
\begin{equation*}
	\begin{cases}
		A_2(\wt_j) = \frac{1}{2\eta} \|\wt_j\|^2_2                                       & \ell_2  \\
		A_H(\wt_j) = \frac{1}{\eta} \sum_{i}\wt_{ij}\log\wt_{ij}                                      & \ell_H  \\
		A_1(\wt_j) = \frac{1}{\eta} \|\wt_j\|_1,\text{ where }0\leq \wt_{ij}\leq1\text{ for all }i.  & \ell_1  
	\end{cases}
\end{equation*}
Clearly $\ell_H$ is not a norm -- we find the notation convenient.

Computing gradient ascent yields online updates
\begin{equation}
	\label{e:updates}
	\Delta \wt_{ij} \propto \mu_j(\x)\cdot \bbO_{ij} - \frac{1}{\eta}\cdot \begin{cases}
		\wt_{ij} & \ell_2 \\
		\log \wt_{ij} + 1 & \ell_H \\
		1 & \ell_1
	\end{cases}
\end{equation}

\begin{rem}[regularizers]
	Each regularizer has points in its favor. The $\ell_1$ regularizer provides a simple interpretation of the saturated synaptic weights observed in some neurophysiological models \citep{fusi:07}. The $\ell_2$ regularizer allows negative synaptic weights, corresponding to inhibitory synapses.  Finally, $\ell_H$ results in weights that can be interpreted as a probability distribution and is closely related to Hanson's logarithmic market scoring rule \citep{chen:07}.	
\end{rem}

\subsection{Utility functions} 
\label{s:utility}

Three biologically inspired utility functions are:

\setcounter{eg}{0}
\renewcommand{\eg}{\emph{Example U\arabic{eg}}. }
\addtocounter{eg}{1}
\begin{eg}(Feedforward, frequency)\textbf{.}\;
	Utility function $\mu(\x)=1$ encourages neurons to spike for inputs that are frequent and contain many spikes. 
\end{eg}

\addtocounter{eg}{1}
\begin{eg}(Feedforward, invariance)\textbf{.}\;
	A more interesting utility function takes inputs over consecutive time steps $\x=(\x^{(t-1)},\x^{(t)})$ as input and sets $\mu(\x)=f_\wt(\x^{(t-1)})$. This encourages neurons to learn \emph{stable} patterns containing many spikes, i.e. those that cause it to spike \emph{twice} consecutively. The utility function can be extended across multiple time steps, possibly with a temporal discount factor.
\end{eg}

\addtocounter{eg}{1}
\begin{eg}(Neuromodulators)\textbf{.}\;
	Neuromodulatory systems signaling global rewards can be modeled via $P(\nu|\x)$ where $\nu$ is a real-valued random variable: positive outcomes are reinforced and conversely. The utility is then $\mu(\x):= \bE_{\nu\sim P(\nu|\x)}[\nu|\x]$, where the expectation is with respect to the distribution on neuromodulators.
\end{eg}

A fourth utility function is discussed in \S\ref{s:feedback}.

\section{Neuronal prediction markets}
\label{s:design}

Scoring rules are schemes for paying agents based on their reports. Proper scoring rules, which incentivize agents to report truthfully, have proven useful in a wide range of settings including weather forecasts \citep{brier:50}, prediction markets \citep{hanson:07, lambert:08} and crowdsourced learning mechanisms \citep{abernethy:11a, abernethy:12}. 

Our main result, Theorem~\ref{t:proper}, is that the scoring rules $\cS_\bullet$ in \eqref{e:regmax} are proper for all three regularizers. The upshot is that a neuron's synaptic weights faithfully encode\footnote{``Truthful reporting'' is not appropriate when referring to neurons. We use the phrase ``faithful encoding'' instead.} expectations about rewards after pre- and post- synaptic spiking activity. The form of the encoding depends on the regularizer.

\subsection{Synapses as rational agents} 
\label{s:npm}

This subsection argues that synapses are analogous to traders, operating within a neuronal market, that attempt to maximize their payout relative to their expenditures.

Prediction market traders buy and sell contingent securities. The simplest case is an \emph{Arrow-Debreu security}, which pays out \$1 if an outcome belongs to a particular set, and \$0 otherwise \citep{arrow:54}. For example an Arrow-Debreu security could pay \$1 if and only if candidate $X$ wins an election. The price a trader will pay depends on her expectations about whether $X$ will win. It turns out that the prices of securities in well-designed, liquid markets reliably aggregate traders' diverse, private information into public estimates of the probabilities of outcomes \citep{hanson:06}.

\begin{center}
\begin{tabular*}{.6\textwidth}{@{\extracolsep{\fill}}| l | l | l |}
	\hline
	$n_j$ & neuron & market \\
	$i\rightarrow j$ & synapse & trader \\
	$\bbO_i$ & spike & security \\
	$A_\bullet(\wt_{ij})$ & regularizer at $i$ & cost to $i\rightarrow j$  \\
	$\wt_{ij}\bbO_i$ & weight $\times$ spike & $\bbO_i$s bought by $i\rightarrow j$ \\
	$\langle\wt_{j},\x\rangle$ & total current & bundle of securities \\
	$\langle\wt_{j},\x\rangle\bbO_j$ & current $\times$ spike & collective bid \\
	$\mu_j(\x)\wt_{ij}\bbO_{ij}$ & reward of $i\rightarrow j$ & payout to $i\rightarrow j$\\
	\hline
\end{tabular*}
\end{center}

Since the neuronal scoring rule decomposes into sum $\cS_\bullet=\sum_i \cS_\bullet^i$, we can model not only neurons, but also synapses, as rational score-maximizing agents. Synapse $i\rightarrow j$ receives payment
\begin{equation}
	\label{e:sscore}
	\cS_\bullet^i := \bE_P\big[(\wt_{ij}\bbO_i-\vartheta)\cdot\mu_j(\x)\bbO_j-A_\bullet(\wt_{ij})\big],
\end{equation}
where $\bbO_j$ depends on vector $\wt_{j}$ and couples the synapses. 

Synapse $i\rightarrow j$ invests amount $A_\bullet(\wt_{ij})$ to set its weight to $\wt_{ij}$. In return, it receives quantity $\wt_{ij}$ of security $\bbO_i$. 

Like paper money, the securities $\bbO_i$ have no intrinsic worth. Instead, they are bundled into total current  $\langle\wt_{j},\x\rangle$. If the bundle exceeds threshold $\vartheta$ then $n_j$ spikes. That is, $n_j$ uses the bundle to bid on an extrinsic event: the utility $\mu_j(\x)$.

After bidding, neuron $n_j$ receives payout $\mu_j(\x)\langle\wt_{j},\x\rangle\bbO_j$, of which it distributes an amount $\mu_j(\x)\wt_{ij}\bbO_{ij}$ to each synapse proportional to its contribution $\wt_{ij}\bbO_i$ to the bundle. Synapses only receive payouts when they spike. Payouts can be positive or negative.

Summarizing, synapses optimize the payout,  $\mu_j(\x)\wt_{ij}\bbO_{ij}$ resulting from their contribution $\wt_{ij}\bbO_i$ to the collective bid, against their cost $A_\bullet(\wt_{ij})$. The neuron's bid $\langle\wt_{j},\x\rangle\bbO_j$ is thus a collective prediction of high utility by its synapses.


\subsection{Proper scoring rules}

The remainder of this section uses \emph{properness} to precisely quantify how synaptic weights relate to utility expectations.

\begin{defn}\label{d:proper}
	Let $\cP_\cO$ be a set of probability distributions on states $\cO$ and define a \textbf{property} as a function $\Gamma:\cP_\cO\rightarrow \cHh$. 
	Scoring rule $S:\cO\times\cHh\rightarrow \bR$ is \textbf{proper} \citep{lambert:08} for property $\Gamma:\cP_\cO\rightarrow \cHh$ if for all $P\in \cP_\cO$
	\begin{equation}
		\label{e:proper-sc}
		\Gamma(P)\in\argmax_{\wt\in\text{range}(\Gamma)}
		\bE_{P}\big[S(\x;\wt)\big].
	\end{equation}
\end{defn}	

Properness is the common-sensical requirement that the true value, $\wt=\Gamma(P)$, is a score maximizer, $\wt\in\argmax \bE_P[S]$. In short: ``you get what you think you are paying for''.

Proper scoring rules can be constructed as follows \citep{abernethy:12}. Given functions $\rho:\cO\rightarrow\cHh$ and $F:\cHh\rightarrow \bR$, define
\begin{equation*}
	S_F:\cO\times\cHh\rightarrow\bR:(\x;\wt) \mapsto - D_F(\rho(\x),\wt) - F(\rho(\x))
\end{equation*}
where $D_F(\x,\y):=F(\x) - F(\y) - \langle \nabla_F(\y),\x-\y\rangle$ is the Bregman divergence. It is shown in \citep{abernethy:12} that:

\begin{prop}[linear proper scoring rules]\label{t:jake}\eod
	If $F$ is convex then $S_F$ is a proper scoring rule for linear property $\Gamma:P\mapsto \bE_{P}[\rho(\x)]$. 
\end{prop}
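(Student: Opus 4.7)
The plan is a direct calculation: fix $P \in \cP_\cO$, let $\wt^* := \Gamma(P) = \bE_P[\rho(\x)]$, and show that $\wt^*$ maximizes $\bE_P[S_F(\x;\wt)]$ over $\wt \in \text{range}(\Gamma)$.

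First I would expand the definition and split the expectation. Writing out
\[
\bE_P[S_F(\x;\wt)] = -\bE_P[D_F(\rho(\x),\wt)] - \bE_P[F(\rho(\x))],
\]
the second term is independent of $\wt$, so the maximization reduces to minimizing $\bE_P[D_F(\rho(\x),\wt)]$ over $\wt$. Now expand the Bregman divergence inside the expectation:
\[
D_F(\rho(\x),\wt) = F(\rho(\x)) - F(\wt) - \langle \nabla F(\wt), \rho(\x) - \wt\rangle.
\]
Only $\rho(\x)$ depends on $\x$ here, and it appears affinely (both as $F(\rho(\x))$ and inside the inner product). By linearity of expectation and the definition of $\wt^*$,
\[
\bE_P[D_F(\rho(\x),\wt)] = \bE_P[F(\rho(\x))] - F(\wt) - \langle \nabla F(\wt), \wt^* - \wt\rangle.
\]

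Next I would recognize the last two terms as a Bregman divergence evaluated at $(\wt^*, \wt)$: indeed $D_F(\wt^*,\wt) = F(\wt^*) - F(\wt) - \langle \nabla F(\wt), \wt^* - \wt\rangle$, so
\[
\bE_P[D_F(\rho(\x),\wt)] = \bE_P[F(\rho(\x))] - F(\wt^*) + D_F(\wt^*, \wt).
\]
Substituting back gives
\[
\bE_P[S_F(\x;\wt)] = -2\bE_P[F(\rho(\x))] + F(\wt^*) - D_F(\wt^*, \wt),
\]
in which only the last term depends on $\wt$.

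Finally, convexity of $F$ implies $D_F(\wt^*,\wt) \geq 0$ for all $\wt$, with equality when $\wt = \wt^*$. Hence $\wt^*$ maximizes $\bE_P[S_F(\x;\wt)]$, so $\Gamma(P)\in\argmax_{\wt}\bE_P[S_F(\x;\wt)]$, establishing properness for the linear property $\Gamma$. There is no real obstacle: the only step requiring care is pulling $\bE_P[\rho(\x)]$ through the inner product $\langle \nabla F(\wt), \,\cdot\,\rangle$, which works precisely because $\rho(\x)$ enters $D_F(\rho(\x),\wt)$ only affinely once one fixes $\wt$ in the second slot.
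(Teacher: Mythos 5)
Your proof is correct. The paper itself states Proposition~\ref{t:jake} without proof, citing \citep{abernethy:12}, so there is no in-paper argument to compare against; your derivation --- reducing score maximization to minimizing $\bE_P[D_F(\rho(\x),\wt)]$, pulling the expectation through the slot in which $\rho(\x)$ enters affinely to rewrite this as $D_F(\wt^*,\wt)$ plus a $\wt$-independent constant, and concluding via nonnegativity of the Bregman divergence of a convex (differentiable) $F$ --- is precisely the standard argument underlying the cited result, and every step checks out.
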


\subsection{Proper scoring for discretized neurons}

This section adapts Proposition~\ref{t:jake} to discretized neurons. As a warmup, we show that dropping the selectivity term from \eqref{e:reward} yields proper scoring rules.

\begin{lem}\label{t:rproper}
	Let $\cT_\bullet(\x;\wt_j) := \mu_j(\x)\cdot \big(\langle\wt_j,\x\rangle -\vartheta\big) - A_\bullet(\wt_j)$ be scoring rules. These are \emph{proper} for $\Gamma^{\cT}_\bullet:\cP_\cO\rightarrow \cHh=\bR^N$,
	\begin{equation*}
		\label{e:rproper}
		\Gamma^{\cT}_\bullet:P  \mapsto G_\bullet\Big(\bE_{P}\big[\mu_j(\x)\cdot \x\big]\Big)
		\text{ for } \begin{cases}
		G_2({\mathbf v}) = \eta\cdot {\mathbf v}\\
		G_H({\mathbf v}) = e^{\eta\cdot {\mathbf v}-1}\\
		G_1({\mathbf v}) = \bbO_{\eta\cdot {\mathbf v}>1},
		\end{cases}
	\end{equation*}
	where $\bbO_{\eta\cdot\bullet>1}$ an $N$-vector of indicator functions returning $1$ when $\eta\cdot\bullet>1$ and 0 otherwise.	
\end{lem}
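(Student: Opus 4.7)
The plan is to verify properness directly from Definition~\ref{d:proper} by a first-order calculation: take expectations, note which terms are constant in $\wt_j$, then solve a concave maximization for each of the three regularizers. One could also apply Proposition~\ref{t:jake} by guessing an $F$ whose Bregman geometry recovers $A_\bullet$, but since $\cT_\bullet$ is already linear in $\wt_j$ up to $-A_\bullet(\wt_j)$ this is unnecessary overhead; the direct route is shorter and makes the role of each $G_\bullet$ transparent.

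First, I would write
$$\bE_P[\cT_\bullet(\x;\wt_j)] = \langle\wt_j,\,\bE_P[\mu_j(\x)\,\x]\rangle \;-\; \vartheta\,\bE_P[\mu_j(\x)] \;-\; A_\bullet(\wt_j),$$
setting ${\mathbf v} := \bE_P[\mu_j(\x)\,\x]$. The $\vartheta\bE_P[\mu_j(\x)]$ term is constant in $\wt_j$ and drops out of the argmax, so properness reduces to showing that $G_\bullet({\mathbf v})$ maximizes $\langle\wt_j,{\mathbf v}\rangle - A_\bullet(\wt_j)$ — i.e., that $G_\bullet$ is the gradient map of the convex conjugate of $A_\bullet$.

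Second, I would handle the three cases separately. For $A_2$, the objective is strictly concave and smooth, so setting $\nabla_{\wt_j}[\langle\wt_j,{\mathbf v}\rangle - \tfrac{1}{2\eta}\|\wt_j\|_2^2]=0$ gives $\wt_j = \eta{\mathbf v}$, matching $G_2$. For $A_H$, the objective decouples coordinatewise and the KKT condition $v_i - \tfrac{1}{\eta}(\log \wt_{ij}+1)=0$ gives $\wt_{ij}=e^{\eta v_i-1}$, matching $G_H$ (and $\wt_{ij}>0$ is automatic, so the implicit domain constraint of $A_H$ is satisfied). For $A_1$, the per-coordinate objective $\wt_{ij}(v_i-\tfrac{1}{\eta})$ on $[0,1]$ is linear, so the maximizer is $\wt_{ij}=1$ iff $\eta v_i>1$, zero iff $\eta v_i<1$ (with ties handled by the ``$\in\argmax$'' in the definition), matching $G_1$.

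The only delicate step is $\ell_1$: because the objective is linear on a compact interval, properness is witnessed set-valuedly at the boundary $\eta v_i=1$, which is why Definition~\ref{d:proper} is stated with $\Gamma(P)\in\argmax$ rather than equality. I would flag this briefly and otherwise the proof reduces to three elementary one-variable optimizations; no real obstacle remains once the $\vartheta$-term is recognized as $\wt_j$-independent.
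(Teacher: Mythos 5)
Your proof is correct, but it takes a genuinely different route from the paper's for two of the three cases. The paper proves the $\ell_2$ and $\ell_H$ cases by invoking Proposition~\ref{t:jake}: it exhibits explicit convex functions $F_2(\x)=\frac{1}{2\eta}\|\x\|_2^2$ and $F_H(\x)=\eta\log\sum_i\exp(\x_i/\eta)$ whose associated Bregman scoring rules recover $\cT_2$ and $\cT_H$ (the latter after restricting to a normalized weight set and passing through a monotone reparametrization), then falls back on a direct gradient computation for $\ell_1$, exactly as you do, because Proposition~\ref{t:jake} does not apply there. You instead bypass the Bregman machinery entirely and treat all three cases uniformly as one concave maximization, identifying each $G_\bullet$ as the gradient map of the convex conjugate of $A_\bullet$. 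What the paper's route buys is the explicit connection to the Abernethy--Frongillo characterization of scoring rules for linear properties, which is thematically central to the paper's prediction-market framing; what your route buys is brevity, a uniform treatment of all three regularizers, a cleaner derivation of $G_H$ that avoids the paper's restriction to the set $\sum_i\exp(\wt_{ij}/\eta)=1$, and an explicit unifying principle (conjugate duality between regularizer and property map) that the paper leaves implicit. Your observations that the $\vartheta\,\bE_P[\mu_j(\x)]$ term is constant in $\wt_j$ and that the $\ell_1$ tie case is absorbed by the ``$\in\argmax$'' formulation of Definition~\ref{d:proper} are both correct and match the paper's implicit handling (the paper simply assumes no expectation equals the threshold exactly). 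No gap.
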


\begin{proof}
	We drop $\vartheta$ since it is independent of $\wt_j$. Define hypothesis space $\cHh = \bR^N$ and map
	\begin{equation*}
		\rho_\mu:\cO\rightarrow \cHh:\x\mapsto \mu(\x)\cdot \x.
	\end{equation*}
	We consider the three cases in turn.
	
	Observe that convex function $F_{2}(\x)=\frac{1}{2\eta}\|\x\|^2_2$ yields scoring rule $S_2(\x,\wt_j) = \langle \mu(\x)\cdot\x,\wt_j\rangle - \frac{1}{2\eta}\|\wt_j\|_2^2$, which implies $\cT_2$ is proper by Proposition~\ref{t:jake}.

	For $\cT_H$, restrict $\cHh$ to the subset of $\bR^N$ where $\sum_i \exp(\frac{1}{\eta}\wt_{ij}))=1$ and define $\psi:\cHh\rightarrow\cHh=\bR^N$ taking $\wt_{ij}\mapsto\frac{\exp(\frac{1}{\eta}\wt_{ij})}{\sum_i \exp(\frac{1}{\eta}\wt_{ij}))} = \exp(\frac{1}{\eta}\wt_{ij})$. Convex function $F_H(\x)=\eta\log(\sum_{i=1}^n\exp(\frac{1}{\eta} \x_i))$ yields
	\begin{align*}
		S_H(\x,\wt_j) 
		& = F_H(\wt_j) + \left\langle \psi(\wt_j),\mu_j(\x)\cdot\x-\wt_j\right\rangle\\
		& = \big\langle \psi(\wt_j),\mu_j(\x)\cdot\x \big\rangle - \frac{1}{\eta} \big\langle\psi(\wt_j),\log\psi(\wt_j)\big\rangle,
	\end{align*}
	since $F_H(\wt_j) = 0$. By Proposition~\ref{t:jake} it follows that $\cT_H$ is proper for linear property $\bE_P[\eta\cdot {\mathbf v}]$. The result follows for $\Gamma^{\cT}_H$ since $e^{\bullet-1}$ is monotonic. We use the nonlinear ``$\exp$'' representation since it directly corresponds to synaptic weights which will be useful in Theorem~\ref{t:proper}.

	Proposition~\ref{t:jake} does not apply to $\cT_1$, so we derive properness by other means. Computing gradients obtains
	\begin{equation*}
		\Delta \wt_j \propto \bE_{P} \big[\mu_j(\x)\cdot \x\big] - \frac{1}{\eta}
	\end{equation*}
	which has a stationary point when all synaptic weights are equal to the scalar $\eta$. The stationary point is unstable -- a local minima rather than maxima. Synapses with $\bE_{P}[\eta\cdot\mu_j(\x)\cdot \bbO_i]>1$ are forced to boundary condition $\wt_{ij}=1$; others are forced to 0 (for simplicity we assume no expectation is precisely $1$). 

	The range of $\Gamma^{\cT}_1$ is the set of $N$-vectors of 0s and 1s. Any $\wt\in \text{\emph{range}}(\Gamma^{\cT}_1)$ differing from $\Gamma^{\cT}_1(P)$ has non-zero gradient and hence a lower score, implying $\Gamma^{\cT}_1$ is proper.
\end{proof}

The selectivity term in \eqref{e:reward} introduces a complication into the scoring rule: potentiating a synaptic weight may cause a neuron to stumble over a sharp change in its utility function that is hidden by the selectivity term. Although the reward function is continuous in $\wt$ its derivative is not: there is a kink. We bound the jump after crossing a kink via
\begin{assn}[no nasty surprises]\label{a:separable}	
	If \\$\Delta_{ij}=\bE_P\big[\mu_j(\x)\bbO_i\bbO_{\wt_j}\big] -\partial_{i}A_\bullet(\wt_j)> 0$ then there exists $\epsilon>0$ such that
	\begin{equation*}
		\bE_P\big[\mu_j(\x)\bbO_{\epsilon\cdot\Delta_{ij}}\big]> - \Delta_{ij},
	\end{equation*}	
	where $\bbO_{\Delta_{ij}}:=\bbO_{\wt_j+\epsilon\cdot\Delta_{ij}}-\bbO_{\wt_j}$.
\end{assn}

Assumption~\ref{a:separable} implies that sufficiently small synaptic updates, Eq.~\eqref{e:updates}, always increase a neuron's score:

\begin{lem}[smooth ascent]
	\label{t:separable}
	Under Assumption~\ref{a:separable}, if $\Delta_{ij}>0$ then there exists $\epsilon>0$ such that
	\begin{equation*}
		\bE_P\big[\cS_\bullet(\x,\wt_j + \epsilon\cdot\Delta_{ij})\big] > \bE_P\big[\cS_\bullet(\x,\wt_j)\big]
	\end{equation*}
	and similarly for $\Delta_{ij}<0$.
\end{lem}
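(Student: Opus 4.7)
The plan is to split $\bE_P\!\left[\cS_\bullet(\x,\wt_j+\epsilon\Delta_{ij}\mathbf{e}_i)-\cS_\bullet(\x,\wt_j)\right]$ into a smooth first-order piece, coming from inputs on which the spike indicator $\bbO_{\wt_j}$ is unchanged, plus a discontinuous ``jump'' piece, coming from inputs on which $\bbO_{\wt_j}$ flips. Since $x_i\in\{0,1\}$ and $\Delta_{ij}>0$, the inner product $\langle\wt_j,\x\rangle$ can only increase in coordinate $i$, so only $0\!\to\!1$ flips are possible; partitioning $\cO$ accordingly kills the kink inside each piece and lets me apply ordinary calculus there.

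On the non-flip set $R$ is exactly linear in $\wt_{ij}$, giving a first-order increment of $\epsilon\Delta_{ij}\,\bE_P[\mu_j\bbO_i\bbO_{\wt_j}]$. A first-order Taylor expansion of $A_\bullet$ (valid for each of the three regularizers on any fixed weight interior to its domain) contributes $-\epsilon\Delta_{ij}\,\partial_i A_\bullet(\wt_j)+O(\epsilon^2)$. Substituting the definition of $\Delta_{ij}$, these sum to $\epsilon\Delta_{ij}^2+O(\epsilon^2)$. On the flip set the new margin $\langle\wt_j+\epsilon\Delta_{ij}\mathbf{e}_i,\x\rangle-\vartheta$ lies pointwise in $(0,\epsilon\Delta_{ij}]$, so the jump contribution is bounded below by $\epsilon\Delta_{ij}\,\bE_P[\mu_j\bbO_{\epsilon\Delta_{ij}}]$; Assumption~\ref{a:separable} then yields a strict lower bound exceeding $-\epsilon\Delta_{ij}^2$.

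Combining the two pieces, the total expected change is at least $\epsilon\Delta_{ij}^2-\epsilon\Delta_{ij}^2+\epsilon\Delta_{ij}\,\delta+O(\epsilon^2)=\epsilon\Delta_{ij}\,\delta+O(\epsilon^2)$, where $\delta>0$ is the slack afforded by the strict inequality in the assumption; shrinking $\epsilon$ if necessary makes this strictly positive. The $\Delta_{ij}<0$ case is symmetric: only $1\!\to\!0$ flips can occur, the new margin falls in $[\epsilon\Delta_{ij},0)$, and the same sign bookkeeping closes the argument. The main obstacle is the jump estimate itself, since the pointwise inequality $\mu_j m\ge\mu_j\epsilon\Delta_{ij}$ only really holds where $\mu_j\le 0$; the cleanest path is to split the flip set by the sign of $\mu_j$, observe that the $\mu_j\ge 0$ part contributes nonnegatively to the jump, and then absorb the $\mu_j<0$ losses using Assumption~\ref{a:separable}, which was designed precisely to bound these ``nasty surprises''.
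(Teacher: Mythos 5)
The paper offers no actual proof here---its entire argument is the phrase ``Straightforward computation''---so your decomposition into a non-flip (smooth, first-order) piece and a flip (jump) piece is surely the intended computation, and most of it is right: for $\Delta_{ij}>0$ and binary $x_i$ only $0\to1$ flips occur, the smooth piece contributes $\epsilon\Delta_{ij}\,\bE_P[\mu_j\bbO_i\bbO_{\wt_j}]-\epsilon\Delta_{ij}\,\partial_iA_\bullet(\wt_j)+O(\epsilon^2)=\epsilon\Delta_{ij}^2+O(\epsilon^2)$, and on the flip set the new margin lies in $(0,\epsilon\Delta_{ij}]$. You also correctly flag the one genuine obstacle: the pointwise bound $\mu_j m\geq \mu_j\,\epsilon\Delta_{ij}$ holds only where $\mu_j\leq0$.

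However, the patch you propose does not close that gap. Splitting the flip set by the sign of $\mu_j$ gives jump $\geq \epsilon\Delta_{ij}\,\bE_P[\mu_j^-\,\bbO_{\epsilon\Delta_{ij}}]$ with $\mu_j^-:=\min(\mu_j,0)$, whereas Assumption~\ref{a:separable} lower-bounds the \emph{signed} total $\bE_P[\mu_j\,\bbO_{\epsilon\Delta_{ij}}]>-\Delta_{ij}$. Since $\mu_j^-\leq\mu_j$, the assumption only yields $\bE_P[\mu_j^-\bbO_{\epsilon\Delta_{ij}}]>-\Delta_{ij}-\bE_P[\mu_j^+\bbO_{\epsilon\Delta_{ij}}]$, and after cancellation your total change is bounded below by $-\epsilon\Delta_{ij}\,\bE_P[\mu_j^+\bbO_{\epsilon\Delta_{ij}}]+O(\epsilon^2)$, which need not be positive: the positive-utility flips earn only $\bE_P[\mu_j^+ m]\geq0$, which can be far smaller than the $\epsilon\Delta_{ij}\,\bE_P[\mu_j^+\bbO_{\epsilon\Delta_{ij}}]$ credit the assumption implicitly hands them. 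The argument closes only if one reads the assumption as bounding the negative part $\bE_P[\mu_j^-\bbO_{\epsilon\Delta_{ij}}]$ (or assumes $\mu_j\leq0$ on the flip set); you should state that reading explicitly rather than rely on the literal inequality. A second, smaller caveat: ``shrinking $\epsilon$ if necessary'' is not free, because the flip set and hence your slack $\delta$ depend on $\epsilon$, and the assumption guarantees the inequality for only one $\epsilon$; since the lemma asserts existence of some $\epsilon$, work at the $\epsilon$ supplied by the assumption and verify the $O(\epsilon^2)$ term (explicit for each of the three regularizers) is dominated there.
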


\begin{proof}
	Straightforward computation.
\end{proof}

Informally, if high utility follows $n_i$ and $n_j$ cospiking, then Assumption~\ref{a:separable} says that the utility of \emph{new} inputs, causing $n_j$ to spike when synapse $i$ increases by $\Delta_{ij}$, is not \emph{too} negative. If the assumption fails then the neuron will continuously potentiate and depotentiate synapse $i$ as the gradient jumps from positive to negative. This is analogous to the behavior of a perceptron when confronted with classes that are not linearly separable.
 
Nasty surprises can be avoided in at least two ways. First, by designing the utility function so that it behaves well with respect to the distribution the neuron encounters. Second, by allowing neuron $n_j$ to modify its regularization parameter $\eta_j$. Going further, one could introduce additional degrees of freedom by associating an $\eta_{ij}$ with each synapse (note the regularizers are sums over synapses) that is tweaked when a neuron detects that one of its synapses jumps back and forth. We do not pursue these ideas here.

Before proving our main result, we introduce some notation. Given $\wt$, let $\bbO_\wt:=f_\wt^{-1}(1)\subset \cO$ and let ${\mathbbm 2}^\cO$ denote the powerset of $\cO$. Enlarge the hypothesis space to $\cHh':= {\mathbbm 2}^\cO \times \cHh$ with embedding $\psi:\cHh\rightarrow \cHh':\wt\mapsto(\bbO_\wt, \wt)$. Let $\wt^*= \argmax_{\wt\in\cHh} \bE_P[\cS_\bullet]$.

\begin{thm}[neuronal scoring rules are proper]\label{t:proper}	
	Under Assumption~\ref{a:separable}, scoring rules $\cS_\bullet$ are \emph{proper} for property $\Gamma_\bullet:{\mathcal P}\rightarrow\cHh'={\mathbbm 2}^\cO \times \cHh$,
	\begin{equation}
		\label{e:proper}
		\Gamma_\bullet:
		P \mapsto \Big(\bbO_{\wt^*}, G_\bullet\big(\bE_P\big[\mu(\x)\cdot \x \cdot\bbO_{\wt^*}\big]\big)\Big),
	\end{equation}
	where $G_\bullet({\mathbf v}) \in\{\eta\cdot {\mathbf v}, e^{\eta\cdot{\mathbf v}-1}, \bbO_{\eta\cdot{\mathbf v}>1}\}$ depending on the choice of regularizer.
\end{thm}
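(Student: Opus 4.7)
The plan is to derive the formula for the second component of $\Gamma_\bullet(P)$ from a first-order optimality condition at $\wt^*$, and then conclude properness immediately from the fact that $\wt^*$ is, by its very definition, the global maximizer of $\bE_P[\cS_\bullet]$. The approach hinges on Lemma~\ref{t:separable}, which tames the non-smoothness introduced by the selectivity factor $\bbO_{\wt_j}$ in the reward.

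First, I would compute the expected score and its partial derivatives on a region where the selectivity pattern $\bbO_\wt$ is locally constant; there $\bE_P[\cS_\bullet(\x,\wt_j)]$ is smooth in $\wt_j$ with
\[
\partial_i \bE_P\bigl[\cS_\bullet(\x,\wt_j)\bigr] \;=\; \bE_P\bigl[\mu_j(\x)\bbO_i\bbO_{\wt_j}\bigr] \;-\; \partial_i A_\bullet(\wt_j) \;=\; \Delta_{ij}.
\]
Lemma~\ref{t:separable} says that whenever $\Delta_{ij}>0$ a small update in coordinate $i$ strictly increases the score, and the symmetric statement rules out $\Delta_{ij}<0$. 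Hence at the global optimum $\wt^*$ we must have $\Delta_{ij}=0$ at every interior coordinate. This is precisely where Assumption~\ref{a:separable} does the work: without it, the gradient calculation could be invalidated by a sudden drop across the kink surface $\langle\wt,\x\rangle=\vartheta$, and a naive stationarity argument would fail.

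Next, I would solve $\Delta_{ij}=0$ separately for each regularizer and read off $G_\bullet$. For $\ell_2$, $\partial_i A_2 = \wt_{ij}/\eta$ gives $\wt^*_{ij} = \eta\,\bE_P[\mu\bbO_i\bbO_{\wt^*}]$, matching $G_2$. For $\ell_H$, $\partial_i A_H = (\log\wt_{ij}+1)/\eta$ gives $\wt^*_{ij} = \exp\!\bigl(\eta\,\bE_P[\mu\bbO_i\bbO_{\wt^*}]-1\bigr)$, matching $G_H$. For $\ell_1$, where the feasible set is $[0,1]^N$ and the interior stationarity condition degenerates to $\bE_P[\mu\bbO_i\bbO_{\wt^*}]=1/\eta$, I would reuse the argument from the $\cT_1$ case of Lemma~\ref{t:rproper}: the interior stationary point is unstable, so every coordinate saturates to $0$ or $1$ according to the sign of $\eta\,\bE_P[\mu\bbO_i\bbO_{\wt^*}]-1$, which is exactly $G_1$.

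Finally, properness in the extended hypothesis space $\cHh' = {\mathbbm 2}^\cO\times\cHh$ drops out with no additional work: the embedding $\psi$ sends $\wt^*$ to $(\bbO_{\wt^*},\wt^*)$, and by the previous paragraph this equals $\Gamma_\bullet(P)$. Since $\wt^*$ maximizes $\bE_P[\cS_\bullet]$ over all of $\cHh$, its image under $\psi$ maximizes over any subset of $\cHh'$ containing it, including $\text{range}(\Gamma_\bullet)$, which is what Definition~\ref{d:proper} requires. The principal obstacle throughout is the discontinuous dependence of $\bbO_{\wt}$ on $\wt$; managing it cleanly, especially in the $\ell_1$ case where Proposition~\ref{t:jake} does not apply, is what forces the proof to go through Lemma~\ref{t:separable} rather than invoking Proposition~\ref{t:jake} directly as in the warm-up Lemma~\ref{t:rproper}.
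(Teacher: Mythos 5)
Your proposal is correct and follows essentially the same route as the paper's own proof: stationarity of the expected score at $\wt^*$ (computed on a region where the selectivity pattern is locally constant) yields the $G_\bullet$ formulas for $\ell_2$ and $\ell_H$, Assumption~\ref{a:separable} via Lemma~\ref{t:separable} is invoked to ensure the stationary points are genuine maxima despite the kink, the $\ell_1$ case is handled by the boundary/saturation argument recycled from Lemma~\ref{t:rproper}, and the first component $\bbO_{\wt^*}$ is proper by construction. Your write-up is somewhat more explicit than the paper's (e.g.\ in spelling out $\partial_i A_\bullet$ for each regularizer and the role of the extended hypothesis space $\cHh'$), but the decomposition and the key lemmas used are identical.
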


\begin{proof}
	Property $P\mapsto \bbO_{\wt^*}$ is proper by construction; we therefore focus on the synaptic term $G_\bullet(\cdot)$ in Eq.~\eqref{e:proper}.
		
	Computing gradients for $\cS_2$ and $\cS_H$ yields stationary points
	\begin{gather*}
		\wt^* = \bE_{P}\big[\eta\cdot \mu(\x)\cdot\x\cdot \bbO_{\wt^*}\big]
		\quad\text{ and }\\
		\wt^* \propto \exp\Big(\bE_{P}\big[\eta\cdot\mu(\x)\cdot \x\cdot \bbO_{\wt^*}\big]-1\Big)
	\end{gather*}
	respectively which are stable maxima under Assumption~\ref{a:separable} by Lemma~\ref{t:separable}. As argued before, a weight vector $\wt$ that does not have zero-gradient cannot be a maxima, and the argument follows from Lemma~\ref{t:rproper}. 
	
	Similar reasoning applies to $\cS_1$.
\end{proof}

\begin{rem}[indirect elicitation]
	\label{e:eliciting}
	Eliciting properties from distributions was studied in \citep{lambert:08}, which drew a distinction between elicitable and \emph{directly} elicitable properties. For example, the variance can only be elicited by a scoring rule if the mean is elicited as well. Similarly, $G_\bullet(\bE_P[\mu(\x)\cdot\bbO_{i}\bbO_{\wt^*}])$ cannot be elicited directly, but only in conjunction with $\bbO_{\wt^*}$.
\end{rem}

Neurons only modify their synapses to incorporate rewards when spiking, Eq.~\eqref{e:stdp}. This encourages specialization, but also implies that individual neurons may never discover that spiking for certain inputs results in very high utility.
More formally, the kink makes $\cS_\bullet$ non-convex, so gradient ascent is not guaranteed to find the global optimum.

Nevertheless, the relationship between synaptic weights and expected utilities in Theorem~\ref{t:proper} still holds:

\begin{cor}[synaptic code]\label{t:stdp_proper}
	Let $\tilde{\wt}$ be the (in general local) maximum of $\cS_\bullet$ obtained by gradient ascent with Eq.~\eqref{e:updates}.
	If Assumption~\ref{a:separable} holds then $\tilde{\wt}$ satisfies
	\begin{equation}
		\label{e:stdp_proper}
		\tilde{\wt} = G_\bullet\Big(\bE_P\big[\mu(\x)\cdot \x \cdot\bbO_{\tilde{\wt}}\big]\Big).
	\end{equation}
\end{cor}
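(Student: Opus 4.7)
My plan is to re-use the stationary-point computation from the proof of Theorem~\ref{t:proper}, observing that the fixed-point equation \eqref{e:stdp_proper} is a first-order condition, and first-order conditions hold at every local maximum reached by gradient ascent, not just at the global optimum. The corollary is essentially the statement that Theorem~\ref{t:proper} is a \emph{local} property of $\cS_\bullet$.

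First, I would argue that $\tilde{\wt}$ is a stationary point of $\cS_\bullet$ in the sense appropriate to each regularizer. For $\ell_2$ and $\ell_H$, the score $\cS_\bullet$ is piecewise smooth in $\wt_j$, with the only non-smoothness occurring at the kink $\langle\wt_j,\x\rangle=\vartheta$; away from this set, the gradient of \eqref{e:regmax} is the online update \eqref{e:updates}. Since $\tilde{\wt}$ is obtained as a fixed point of gradient ascent with step rule \eqref{e:updates}, its increment $\bE_P[\mu_j(\x)\bbO_i\bbO_{\tilde{\wt}_j}]-\partial_i A_\bullet(\tilde{\wt}_j)$ must vanish. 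Here I would invoke Lemma~\ref{t:separable} to handle the kink: if $\Delta_{ij}\neq 0$ at $\tilde{\wt}$, then Assumption~\ref{a:separable} gives a strictly improving update $\tilde{\wt}+\epsilon\cdot\Delta_{ij}$, contradicting that $\tilde{\wt}$ is a (local) maximum found by gradient ascent. Hence $\Delta_{ij}=0$ component-wise, giving $\tilde{\wt}_{ij}=\eta\cdot\bE_P[\mu(\x)\cdot x_i\cdot\bbO_{\tilde{\wt}}]$ for $\ell_2$ and $\tilde{\wt}_{ij}=\exp\bigl(\eta\cdot\bE_P[\mu(\x)\cdot x_i\cdot\bbO_{\tilde{\wt}}]-1\bigr)$ for $\ell_H$, which are precisely $G_2$ and $G_H$ applied to the conditional expectation.

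For $\ell_1$, I would borrow the argument from Lemma~\ref{t:rproper}: the gradient of $\cS_1$ with respect to $\wt_{ij}$ is $\bE_P[\mu(\x)\cdot x_i\cdot\bbO_{\tilde{\wt}}]-\tfrac{1}{\eta}$, and the constraint $\wt_{ij}\in[0,1]$ forces any stable output of projected gradient ascent to the boundary determined by the sign of this quantity. This gives $\tilde{\wt}_{ij}=\bbO_{\eta\cdot\bE_P[\mu(\x)\cdot x_i\cdot\bbO_{\tilde{\wt}}]>1}=G_1(\bE_P[\mu(\x)\cdot x_i\cdot\bbO_{\tilde{\wt}}])$, again assuming generic expectations so the boundary case is avoided.

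The main obstacle, as in Theorem~\ref{t:proper} itself, is the non-differentiability at the kink: a naive first-order argument could fail because $\tilde{\wt}$ might be a local maximum only in the sense that the score drops sharply after crossing a kink, without the smooth gradient vanishing. Assumption~\ref{a:separable} and Lemma~\ref{t:separable} are exactly what rules this out, by guaranteeing that small updates in the direction of a non-zero smooth gradient never incur a ``nasty surprise'' large enough to block ascent. Once this is invoked, the rest of the proof is the straightforward stationary-point calculation already carried out in Theorem~\ref{t:proper}, applied to $\tilde{\wt}$ instead of to the global maximizer $\wt^*$.
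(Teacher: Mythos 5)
Your proposal is correct and follows essentially the same route as the paper, whose entire proof is the observation that local maxima are stationary points and that the stationary-point computation from Theorem~\ref{t:proper} (with the kink handled by Assumption~\ref{a:separable} via Lemma~\ref{t:separable}, and the $\ell_1$ case by the boundary argument of Lemma~\ref{t:rproper}) applies verbatim to $\tilde{\wt}$. You have simply written out in detail what the paper leaves implicit.
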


Note that Eq.~\eqref{e:stdp_proper} is not in closed form since $\tilde{\wt}$ appears on both the left- and right-hand sides. 

\begin{proof}
	Since local maxima are stationary points, the proof follows the same argument as Theorem~\ref{t:proper}.
\end{proof}

A discretized neuron $n_j$ thus faithfully encodes two properties of its input distribution. First, its spikes encode a set of inputs for which spiking is locally optimal. Second, its synaptic weights encode the \emph{expected utility per synapse} when $n_i$ and $n_j$ co-spike. 

\begin{rem}[neural code]
	Corollary~\ref{t:stdp_proper} provides an interesting interpretation of the meaning of spikes. A neuron spikes if the dot product $\langle\wt_{j},\x\rangle$ is above threshold $\vartheta$. That is, neuron $n_j$'s spike means that the \emph{current} system state $\x$ is significant (above threshold) when evaluated against the utility expectations $\wt_{j}$ that were \emph{previously encoded} into $n_j$'s structure.	
\end{rem}

Similarly to how stock price movements encode information about which sectors of an economy are expected to yield high profits in the near future; spikes and synaptic weights encode expectations about future rewards.

\section{Cortical prediction markets}
\label{s:bp}

This section investigates how neurons can estimate their usefulness to downstream neurons, and so allocate their resources such that the benefit to \emph{other} neurons is maximized. In short, we introduce a utility function that incentivizes neurons to optimize their usefulness to other neurons.

\subsection{Backpropagation: errors or incentives?} 
\label{s:feedback}

To provide context, we recall related work on incorporating spikes into a reward signal. Neuromodulators provide a primary reward system. However, neurons whose actions do not directly result in pleasure or pain may require more indirect incentives. In machine learning, multilayer networks are often trained by backpropagating errors \citep{rumelhart:86}. However, backpropagation (BP) is biologically implausible -- it requires pathways for backpropagating errors which have not been observed in cortex \citep{roelfsema:05}. 

As an alternative, \citep{roelfsema:05} proposed attention-gated reinforcement learning (AGREL), which uses feedback spikes as attention signals to modulate learning. AGREL abstracts two features of feedback (NMDA) connections in cortex: (i) they prolong, but do not initiate, spiking activity and (ii) they have a multiplicative effect on synaptic updates.

\vspace{-4mm}
\begin{equation*}
	\xymatrix{
	*+[o][F]{n_i}\ar@{->}[r]^{\wt^{ff}_{ij}} & *+[o][F]{n_j} \ar@/_/[r]_{\wt_{jk}} & *+[o][F]{n_k}\ar@/_/[l]_{\wt^{fb}_{kj}}
	}
\end{equation*}

AGREL updates feedforward weights according to 
\begin{equation}
	\label{e:agrel}
	\Delta \wt_{ij} \propto  \wt^{fb}_{kj} \x_k \cdot \x_i \x_j\cdot(1-\x_j)\cdot f(\delta),
\end{equation}
where $f(\delta)$ is a global reward signal. Here, neurons have real-valued outputs and $(1-\x_j)$ is a regularizer that prevents $n_j$ from overactivating. The main result of \citep{roelfsema:05} is that average weight changes $\bE\big[\Delta \wt_{ij}\big]$ under \eqref{e:agrel} coincide with BP. AGREL thus provides a biologically plausible substitute for BP.

Inspired by AGREL, we introduce a $4^{th}$ utility function:

\addtocounter{eg}{1}
\begin{eg}(Feedback)\textbf{.}\;
	Identify disjoint upstream and downstream populations, $\x^{ff}$ and $\x^{fb}$ respectively, and define $\cHh^{ff}$ and $\cHh^{fb}$ by clamping weights not in the respective populations to zero using Eq~\eqref{e:mask}. Define the feedback utility as $\mu^{fb}_j(\x) := \langle \wt^{fb}_j, \x\rangle$ for $\wt_j^{fb}\in \cHh^{fb}$.
\end{eg}

A neuron with feedback utility maximizes
\begin{equation}
	\label{e:feedback_max}
	\bE_{P}\Big[\langle\wt^{fb}_j,\x\rangle\big(\langle\wt^{ff}_j,\x\rangle-\vartheta\big)\bbO_j - A_\bullet(\wt_j)\Big]
\end{equation}
and so aligns its feedforward $\langle\wt^{ff}_j,\x\rangle$ and feedback $\langle\wt^{fb}_j,\x\rangle$ current whenever the neuron itself spikes. 

Computing gradient ascent on scoring rule 
$\cS^{fb}_\bullet(\wt;\x) = \langle\wt^{fb},\x\rangle\cdot (\langle \wt^{ff},\x\rangle-\vartheta)\cdot \bbO_j - A_\bullet(\wt)$ 
obtains
\begin{equation}
	\Delta \wt_{ij}^{ff} \propto \langle\wt^{fb}_{j},\x\rangle\cdot \bbO_{ij} - \partial_{i} A_\bullet(\wt),
\end{equation}
which differs from AGREL \eqref{e:agrel} by using $\partial_{i}A_\bullet$ as regularizer instead of $(1-\x_j)$ and extending feedback from a single neuron, $\wt^{fb}_{kj}\x_k$, to many neurons, $\langle\wt^{fb}_{j},\x\rangle$. We also drop the global reward signal $f(\delta)$ since we are interested in the pure backpropagation case; it can easily be reinstated.

Note that the utility function $\mu^{fb}$ is itself plastic. Neuron $n_j$ not only modifies feedforward weights to maximize its score, it also modifies feedback weights to increase the maximum achievable score:
\begin{equation}
	\Delta \wt_{kj}^{fb} \propto (\langle\wt^{ff}_j,\x\rangle-\vartheta)\cdot\bbO_{jk} - \partial_{k} A_\bullet(\wt).
\end{equation}

\subsection{Estimating usefulness with feedback}
As suggested in the introduction, one way to encourage collaboration is for each neuron to estimate its usefulness to the rest of the system and optimize that estimate. By Corollary~\ref{t:stdp_proper}, a faithful measure of the usefulness of $n_j$'s output to the rest of cortex is the sum of active downstream synaptic weights:

\begin{defn}
	The \textbf{usefulness} $\V_j(\x)$ of a spike by $n_j$ is the sum of the synaptic weights of downstream neurons that co-spike with $n_j$: 
	\begin{equation}
		\label{e:usefulness}
		\V_j(\x):=\sum_{\{k \,|\, j\rightarrow k\}} \wt_{jk}\bbO_{jk} = \langle\wt_{j\bullet},\x\rangle\bbO_j.	
	\end{equation}
	Intuitively, $\V_j(\x)$ is the total utility that spiking downstream neurons expect after $n_j$ spikes.
\end{defn}

Neurons cannot compute their usefulness directly, since the utilities of downstream neurons are private. They must therefore make do with publicly available data: \emph{spikes by other neurons}. We therefore propose that neurons use feedback, which they can actually compute, as a proxy for usefulness, which would be ideal. 

As a consequence of Corollary~\ref{t:stdp_proper}, we quantify how closely feedback-utility approximates usefulness \eqref{e:usefulness}:

\begin{cor}[estimating usefulness with feedback]
	\label{t:feedback}
	Neuron $n_j$ equipped with utility function $\mu^{fb}(\x)$ approximately maximizes its usefulness $\V(n_j)$ to the rest of cortex, where the failure of the approximation is
	\begin{equation*}
		\sum_{k} \bbO_{jk}\left[
		\overbrace{\underbrace{G_\bullet\Big(\bE\big[\mu_k(\x) \bbO_{jk}\big]\Big)}_{\wt_{jk}}}^{\text{usefulness}}
		- \overbrace{\underbrace{G_\bullet\Big(\bE\big[\langle \wt^{ff}_j,\x\rangle\bbO_{jk}\big]\Big)}_{\wt^{fb}_{kj}}}^{\text{approximation}}
		\right].
	\end{equation*}
\end{cor}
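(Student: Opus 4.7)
The plan is to isolate what neuron $n_j$ actually optimizes under the feedback utility $\mu^{fb}$, compare that against the ideal usefulness $\V_j(\x)$, and then invoke Corollary~\ref{t:stdp_proper} twice to rewrite the resulting gap in the form claimed.

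First I would expand $\mu^{fb}_j(\x) = \langle\wt^{fb}_j,\x\rangle = \sum_k \wt^{fb}_{kj}\bbO_k$. On spikes of $n_j$ the realized feedback-scored reward is therefore $\sum_k \wt^{fb}_{kj}\bbO_{jk}$, while the true usefulness from \eqref{e:usefulness} is $\V_j(\x) = \sum_k \wt_{jk}\bbO_{jk}$. Subtracting yields
\begin{equation*}
\V_j(\x) - \mu^{fb}_j(\x)\bbO_j \;=\; \sum_k \bbO_{jk}\bigl(\wt_{jk} - \wt^{fb}_{kj}\bigr),
\end{equation*}
which already matches the outer structure of the displayed error term. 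So the work reduces to rewriting $\wt_{jk}$ and $\wt^{fb}_{kj}$ in the two ``boxed'' forms.

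Second, I would apply Corollary~\ref{t:stdp_proper} to each downstream neuron $n_k$: at its local maximum the weight $n_k$ places on its input from $n_j$ is $\wt_{jk} = G_\bullet\bigl(\bE[\mu_k(\x)\bbO_{jk}]\bigr)$, giving the first boxed expression. Third, I would apply the same corollary to $n_j$'s own feedback-weight dynamics $\Delta\wt^{fb}_{kj} \propto (\langle\wt^{ff}_j,\x\rangle - \vartheta)\bbO_{jk} - \partial_k A_\bullet(\wt)$. Structurally this update coincides with the feedforward STDP update in \eqref{e:updates} once we re-identify the effective utility $\mu \leftrightarrow \langle\wt^{ff}_j,\x\rangle$ and the ``presynaptic'' spike $\bbO_i \leftrightarrow \bbO_k$ (the feedback spike replaces the input spike; the $\vartheta$ drops from the gradient). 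The properness argument of Theorem~\ref{t:proper} therefore applies verbatim and gives $\wt^{fb}_{kj} = G_\bullet\bigl(\bE[\langle\wt^{ff}_j,\x\rangle\bbO_{jk}]\bigr)$, populating the second boxed expression. Substituting both into the sum above produces exactly the stated error term.

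The main obstacle is that the three characterizations invoked here — of $\wt_{jk}$ at downstream $n_k$, of $\wt^{ff}$ at $n_j$, and of $\wt^{fb}_{kj}$ at $n_j$ — belong to coupled learners, whereas Corollary~\ref{t:stdp_proper} describes each at its own local maximum in isolation. So the step requiring care is justifying that a joint local equilibrium exists at which Assumption~\ref{a:separable} holds for every player simultaneously, and under which the verb ``approximately maximizes'' is warranted: $n_j$ genuinely climbs the gradient of the surrogate $\bE\bigl[\sum_k \wt^{fb}_{kj}\bbO_{jk}\bigr]$, and the deviation of this surrogate from $\bE[\V_j(\x)]$ is exactly the displayed expression. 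Once the coupling issue is conceded, the remainder is algebraic bookkeeping and the ``Straightforward computation'' remark of Lemma~\ref{t:separable} suffices.
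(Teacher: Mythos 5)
Your proposal is correct and follows essentially the same route as the paper: identify the gap as $\sum_k \bbO_{jk}(\wt_{jk}-\wt^{fb}_{kj})$ and then apply Corollary~\ref{t:stdp_proper} twice, once to the downstream feedforward weights $\wt_{jk}$ and once to the feedback weights $\wt^{fb}_{kj}$. Your closing caveat about the coupled learners is a point the paper's proof silently elides, so flagging it is a strength rather than a deviation.
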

Thus, the quality of $\langle\wt^{fb}_{j},\x\rangle$ as an estimate of $\V_j(\x)$ depends on how closely $n_j$'s feedforward inputs $\langle\wt^{ff}_j,\x\rangle$ approximate the sum of the downstream utilities $\mu_k(\x)$. 	

\begin{proof}
	The usefulness and utility of $n_j$ are 
	\begin{equation*}
		\V_j(\x)  = \langle\wt_{j\bullet},\x\rangle\bbO_j 
		\quad\text{and}\quad
		\mu^{fb}_j(\x)  = \langle\wt^{fb}_{\bullet j},\x\rangle
		\quad\text{respectively.}
	\end{equation*}
	The utility $\mu^{fb}_j$ is multiplied by $\bbO_j$ when it is used in scoring rules, so the difference comes down to the weights.
	Corollary~\ref{t:stdp_proper} implies the optimal feedforward weights are
	\begin{equation*}
		\wt_{jk} = G_\bullet\Big(\bE_{P}\big[\mu_k(\x)\cdot \bbO_{jk}\big]\Big)
	\end{equation*}
	so that the usefulness of $n_j$ is
	\begin{equation*}
		\V_j(\x) = \sum_{\{k \,|\, j\rightarrow k\}} \bbO_{jk} \cdot G_\bullet\Big(\bE_{P}\big[\mu_k(\x)\bbO_{jk}\big]\Big).
	\end{equation*}
	Again by Corollary~\ref{t:stdp_proper}, the properness of the scoring function implies the optimal weights for $k\rightarrow j$ satisfy
	\begin{equation*}
		\wt^{fb}_{kj} 
		= G_\bullet\Big(\bE_{P}\big[\langle \wt^{ff},\x\rangle \bbO_{jk}\big]\Big)
	\end{equation*}
	and we are done.
\end{proof}

Experiments in \S\ref{s:experiments} demonstrate that $\wt^{fb}_{kj}$ is a good proxy for $\wt_{jk}$ in some interesting cases. 

A detailed analysis of the relationship between approximations $\wt_{jk}\approx \wt^{fb}_{kj}$, distribution $P(\x)$, and utility functions $\mu_k(\x)$ is beyond the scope of this paper.

\subsection{Neurons as rational agents}
\label{s:cpm}


Section~\S\ref{s:npm} suggested neurons are analogous to markets in which synapses trade. This section presents a second analogy, where cortex forms a market in which neurons trade.

Recall that neurons are rational agents that optimize their expected reward balanced against a cost term, Theorem~\ref{t:limit}:
\begin{equation*}
	\wt_j^*:= \argmax_{\wt_j\in\cHh} \,\bE_P\Big[\big(\langle\wt_{j},\x\rangle-\vartheta\big)\cdot \mu_j(\x)\bbO_j - A_\bullet(\wt_{j})\Big].
\end{equation*}

The key idea is that each neuron should optimize its usefulness to the rest of the brain. Building on Corollary~\ref{t:stdp_proper}, usefulness is defined as $\V_j(\x)=\sum_{j\rightarrow k} \wt_{jk}\bbO_{jk}$. That is, the quantity of $\bbO_j$ used by downstream neurons in their internal markets. Unfortunately, $n_j$ does not have access to this number. Similarly to how musicians are paid for actual sales rather than downloads of their music, neurons need to record when their outputs are used. They therefore use feedback to compute $\langle\wt^{fb}_{\bullet j},\x\rangle\bbO_j$, which acts as a proxy\footnote{Recorded usage could over- or under- estimate true usage. Section~\S\ref{s:experiments} shows that it is a good guide in practice.} for $\langle\wt^{fb}_{\bullet j},\x\rangle\bbO_j$.

\begin{center}
\begin{tabular*}{.6\textwidth}{@{\extracolsep{\fill}}| l | l | l |}
	\hline
	$n_j$ & neuron & trader\\
	$\langle\wt^{ff}_{\bullet j},\x\rangle\bbO_j$ & ff current $\times$ spike & purchases by $n_j$ \\
	$\langle\wt_{j\bullet},\x\rangle\bbO_j$ & usefulness $\V_j(\x)$ of $n_j$ & use made of $n_j$\\
	$\langle\wt^{fb}_{\bullet j},\x\rangle\bbO_j$ & fb current $\times$ spike & recorded usage\\
	&  & = payment to $n_j$\\
	\hline
\end{tabular*}
\end{center}

Intuitively, $n_j$ simultaneously sets its feedback connections on the downstream traders that most frequently purchase its spikes, and sets its feedforward connections on the upstream traders that sell the most useful spikes. 

The result is a mesh of intertwining neuronal chains -- optimized for usefulness at every link by the invisible hand of the cortical market -- that connects sensory inputs to motor actions.


\section{Experiments}
\label{s:experiments}

\begin{figure}
	\begin{center}
	    \includegraphics[width=4in]{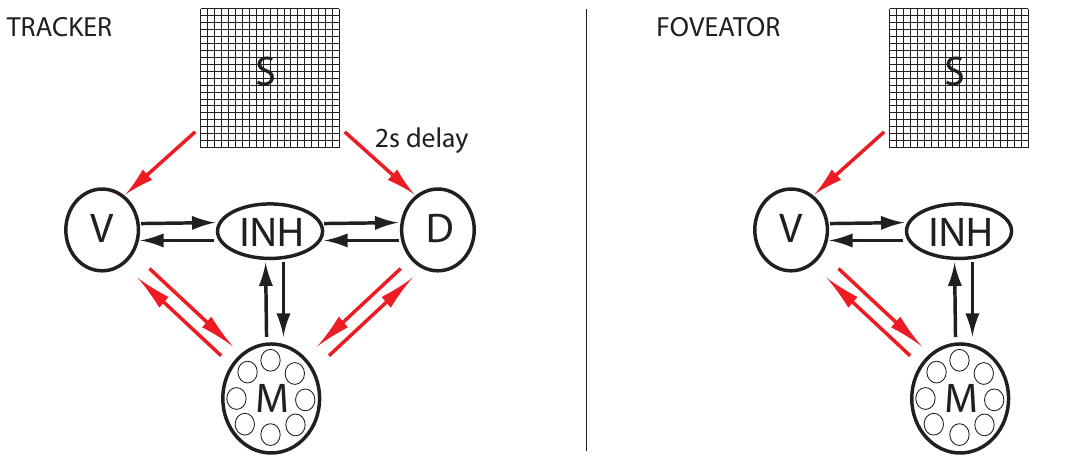}		
	\end{center}
  \caption{Foveator and Tracker architectures. Arrows are initialized randomly. Red arrows are plastic; black are fixed.
  }
  \label{f:arch} 
\end{figure}

We investigate the empirical performance of discretized neurons. The experiments are designed to show that:  (i) the ideas above can be implemented with minimal modifications; (ii) synaptic weights encode environmental statistics and rewards; (iii) feedback improves performance; and (iv) feedback reliably estimates a neuron's usefulness. 

We have therefore constructed networks, inspired by \citep{Nere:2012fk}, that learn tasks designed so that the embedding of expected utilities into synaptic weights is easy to visualize. 

Our goal is not to compete with the state of the art. Rather, our aim is to introduce mechanism design techniques into the analysis and construction of networks.  A pressing open question is whether more sophisticated networks, such as those developed by the deep learning community, can be understood or improved via mechanism design.

\paragraph{Network architectures} 
The \textbf{tracker network}, Fig.~\ref{f:arch} left, has a sensory grid $S$ of $20\times20$ neurons, intermediate layers $V$ and $D$ with 100 neurons each, motor layer $M$, and 100 randomly connected inhibitory neurons $INH$. Signals from $S$ to $D$ are \emph{delayed} so that $V$ and $D$ receive different temporal snapshots of $S$. Synapses are plastic except those to or from inhibitory neurons. $M$ is divided into 8 areas of 10 neurons each. Actuators engage when they receive more than 10 spikes. The network is initialized randomly. 

The tracker network tracks targets traveling along an edge of the visual field. Motor areas are rewarded ($\mu_j=+1$) or punished ($\mu_j=-1$) according to whether or not the action correctly anticipates where the target is headed and from which direction ($4\times 2$ possibilities).
Note the motor layer receives neuromodulatory signals whereas the intermediary layers do not and learn from feedback. 

The \textbf{foveator network}, Fig.~\ref{f:arch} right, drops $D$ and has fewer inhibitory neurons. 

The foveation task is to move the fovea (center of the retina) onto an object appearing on the edge of the visual field. Each motor area controls an actuator that moves the fovea in a compass direction (N, NW, W, etc). After a movement, the corresponding area is rewarded ($\mu_j=1$) if the object is closer to the center and punished ($-1$) otherwise.

\begin{figure*}[t]
	\centering	
	\subfigure[Synapses on paths through subsystem $V$]{\includegraphics[width=2.5in]{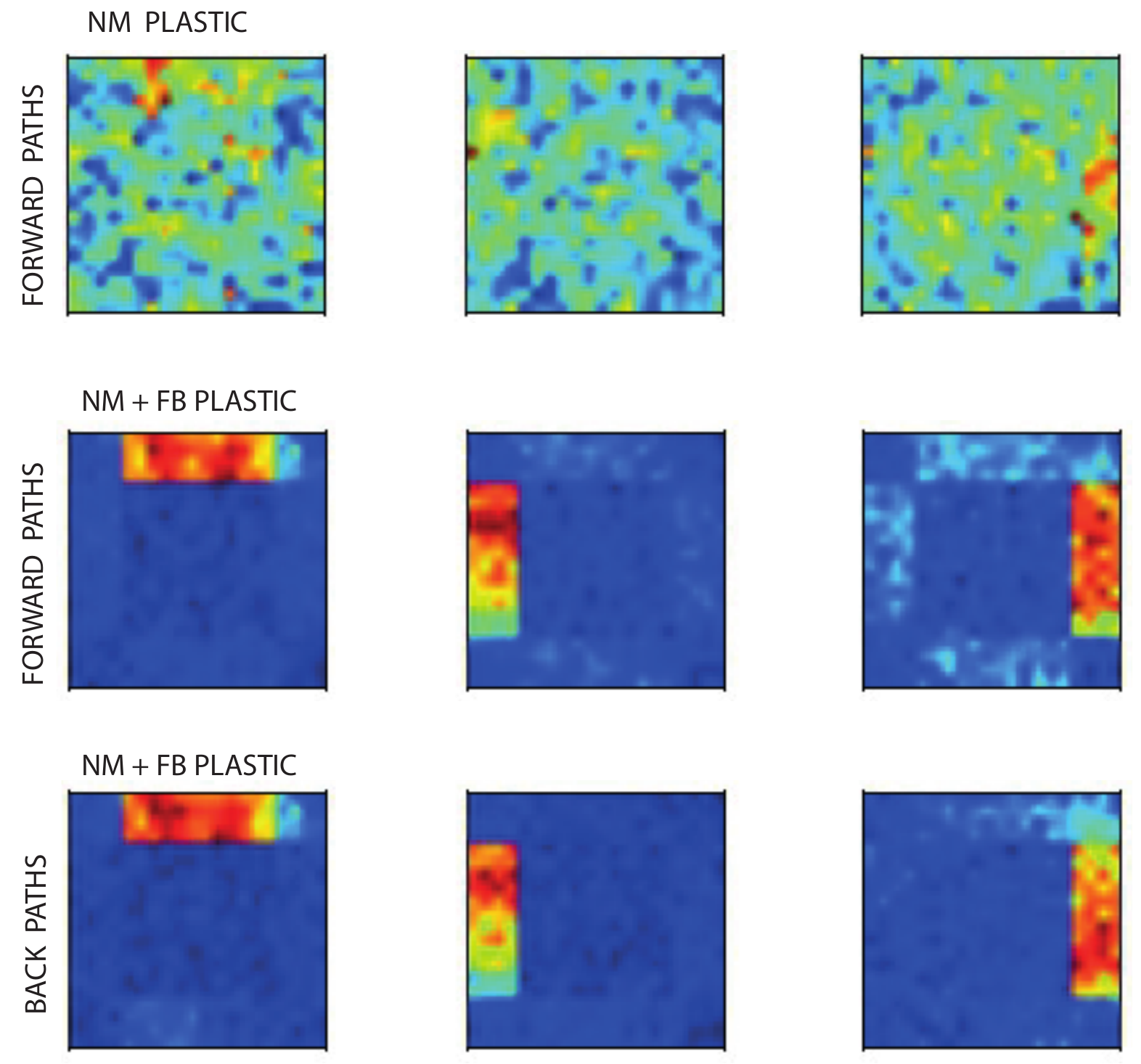}
	\label{f:trk-v}}
	\hspace{10mm}
	\subfigure[Synapses on paths through subsystem $D$]{\includegraphics[width=2.5in]{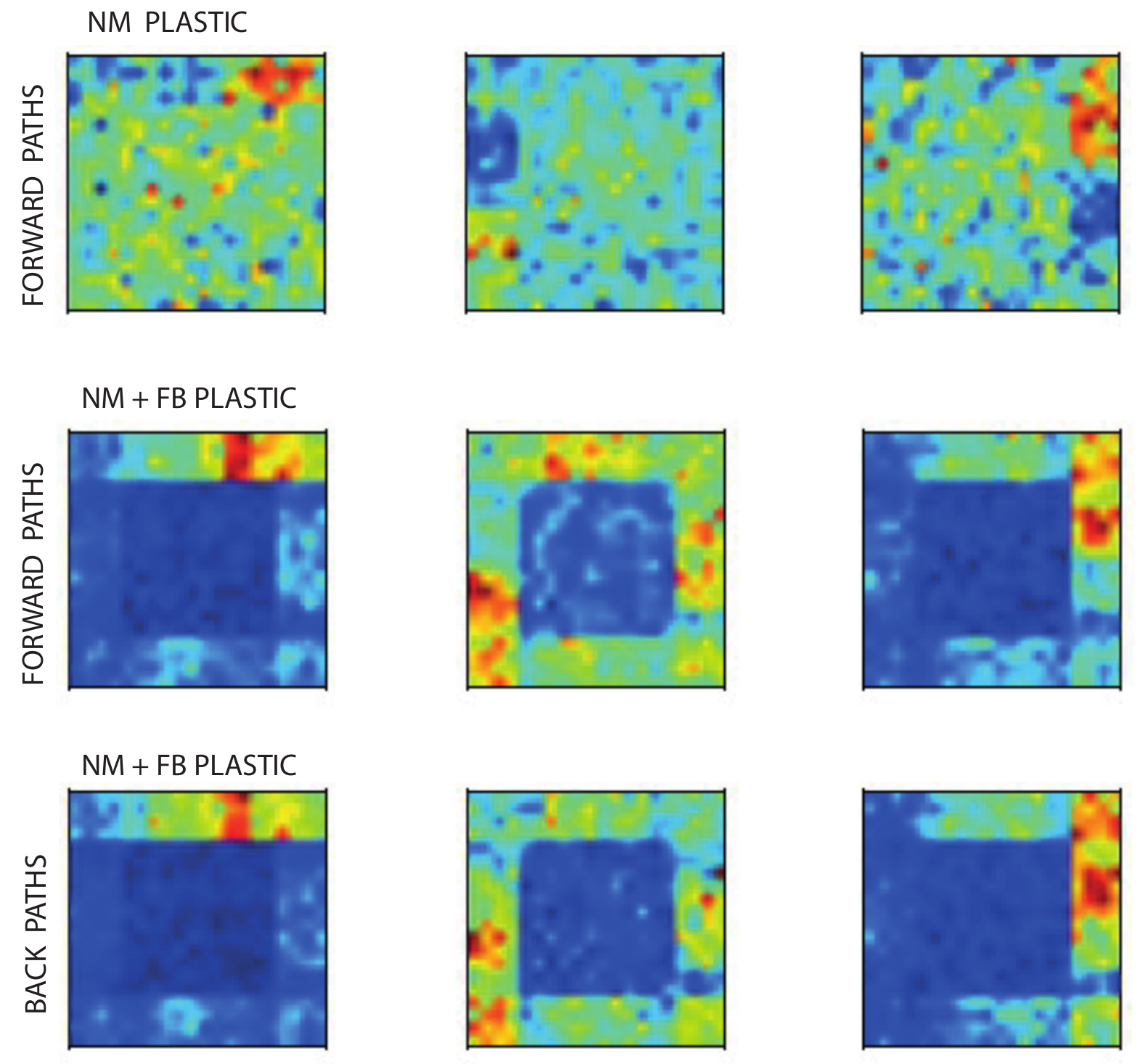}
	\label{f:trk-d}}
	\vspace{-3mm}
 	\caption{Average synaptic weights in Tracker network.
	}
	\label{f:trk}
\end{figure*}

We tweak the discretized neuron in \S\ref{s:minimal} to make the dynamics closer to continuous time models of cortical neurons.

First, we introduce a voltage term $V$, which provides neurons with a steadily decaying ``memory'' of previously received spikes. Neurons spike when $V>\vartheta$, after which $V\leftarrow0$. When the neuron does not spike, $V$ is updated according to
\begin{equation*}
	V\leftarrow V + \langle\wt,\x\rangle - \delta.
\end{equation*}
Neurons maintain an exponentially decaying trace reflecting recent output spikes:
\begin{equation*}
	\mathtt{trace}_j \leftarrow 0.95\cdot\big(\mathtt{trace}_j + 0.4\cdot\bbO_j\big)
\end{equation*}
Neurons in subsystems $V$ and $D$ update their feedforward synapses according to
\begin{equation*}
	\Delta\wt_{ij} \propto  \langle\wt^{fb},\x\rangle\cdot\bbO_i\cdot\mathtt{trace}_j
\end{equation*}
and similarly for feedback. Thus, $\mathtt{trace_j}$ is substituted for $\bbO_j$ to temporally smooth out learning. Neurons in subsystem $M$ update their synapses according to
\begin{equation*}
	\Delta\wt_{ij} \propto \mu_j(\x)\cdot \sum_{t'=t-m}^{t} \bbO_{ij}
\end{equation*}
where the sum is over tics since the last neuromodulatory signal, similar to the trace implemented in $V$ and $D$. 

Finally, we tweak the regularization. Instead of continually regularizing by $A_1(\wt)$, we regularize at discrete intervals, analogous to a hypothesized role of sleep \citep{bb:12}. Regularization consists of setting the $K$ strongest synapses to 1, and pruning the rest (i.e. setting their weights to 0). The number $K$ is fixed within each layer, but varies across layers.

\paragraph{Visualizing synaptic strengths}
By Corollary~\ref{t:feedback}, the quality of a neuron's usefulness-estimate can be computed and visualized by comparing average feedforward and feedback weights. 

Weights are visualized in Fig \ref{f:trk} and \ref{f:fov} as follows. For each area in $M$, we average over all \emph{feedforward} paths $S\rightarrow V\rightarrow M$ and \emph{feedback} paths $S\rightarrow V\leftarrow M$ respectively, and similarly for $D$. To save space, 3 out of 8 areas in $M$ are plotted. Plots are averaged over 20 runs. Blue denotes low values; red denotes high.

\paragraph{Results}
(i) The tasks are easy and the networks rapidly (within a few thousand tics) achieve 98\% and 95\% accuracy. 
The tracker outperforms the foveator, possibly because the foveator modifies its environment by actively moving the center of the retina, whereas the tracker does not.

The delay line is essential to tracker performance: if the delay is set to zero then the network performs little better than chance, and the structure of the environment is not learned at all.

\vspace{1mm}
\begin{center}
\begin{tabular*}{.6\columnwidth}{@{\extracolsep{\fill}}| l | c c |}
	\hline
	\emph{Tracker} & \% correct & \# correct\\
	\hline
	only $M$ plastic & 95 & 243 \\
	all plastic & 98 & 672 \\
	\hline
	\emph{Foveator} & & \\
	\hline
	only $M$ plastic & 93 & 80 \\
	all plastic & 95 & 207 \\
	\hline
\end{tabular*}
\end{center}
\vspace{1mm}

(ii) The middle rows of Fig~\ref{f:trk} and \ref{f:fov} show how rewards and environmental statistics are incorporated into the networks' feedforward structure. 

For the tracker network, the $V$-area synapses learn trajectories; whereas the $D$-area learns the \emph{starting points} of trajectories. The combination of instantaneous lines in $V$ (which learn directions) and delay lines in $D$ (which learn starting points) thus allows the network to implicitly compute derivatives and thereby determine directions of travel. 

For the foveator, it is easy to read off the correspondence between the NE, N, and NW movements of the actuators and the locations of objects driving the movements.

(iii) Shutting off feedback plasticity (top rows of Fig~\ref{f:trk} and \ref{f:fov}) slightly worsens performance, from 98\% to 95\% for the tracker and from 95\% to 93\% for the foveator. However, it dramatically worsens the ``reaction times'' of the networks, quantified as the number of times the actuators correctly engage per 1000 tics. 

Indeed, looking at synaptic weights \emph{without} feedback plasticity, top rows of the figures, we find that the structure of the rewards and environment is barely visible.

(iv)  Finally, when feedback plasticity is turned on, average synaptic weights over feedforward paths $T\rightarrow V/D\rightarrow M$ (middle rows) and feedback paths $T\rightarrow V/D\leftarrow M$ (bottom rows) are almost identical, demonstrating that neurons in $V$ and $D$ accurately estimate their usefulness to downstream $M$ neurons using feedback.

\section{Conclusion}
\label{s:discussion}

This paper applied tools from mechanism design to investigate a simple model of cortical neurons. The main result is that, under a technical assumption, neurons faithfully encode expected utilities into their synaptic weights. If the result can be extended to more realistic models, then it will provide a powerful new approach to understanding the relationship between cortical structure and function.

There is good reason to be optimistic: extending the analysis to continuous time models requires exponential discount factors analogous to interest rates -- which are well-understood in mechanism design. 

An important corollary of the analysis is a novel interpretation of the role of spiking feedback in cortex: neurons can use feedback spikes to estimate their usefulness to the rest of cortex, and then learn to maximize that estimate. 

We have used the the simplest possible scoring rules, derived from standard models of neurons, to provide a proof of principle. It will be interesting to explore more realistic models taken from the neuroscience literature, and also more powerful models such as those developed for deep neural networks. 

Finally, although the flow of ideas in this paper is one-sided -- from mechanism design to neuronal models -- we expect that future work will be more symmetric. The cortex aggregates information far more effectively than the auctions and online markets studied in game theory. This suggests there are powerful design principles waiting to be uncovered. 

\vspace{3mm}
\noindent
\textbf{Acknowledgements.}
I am grateful to Hastagiri Vanchinathan for encouraging me to look into mechanism design.


\begin{figure}
	\begin{center}
	    \includegraphics[width=2.5in]{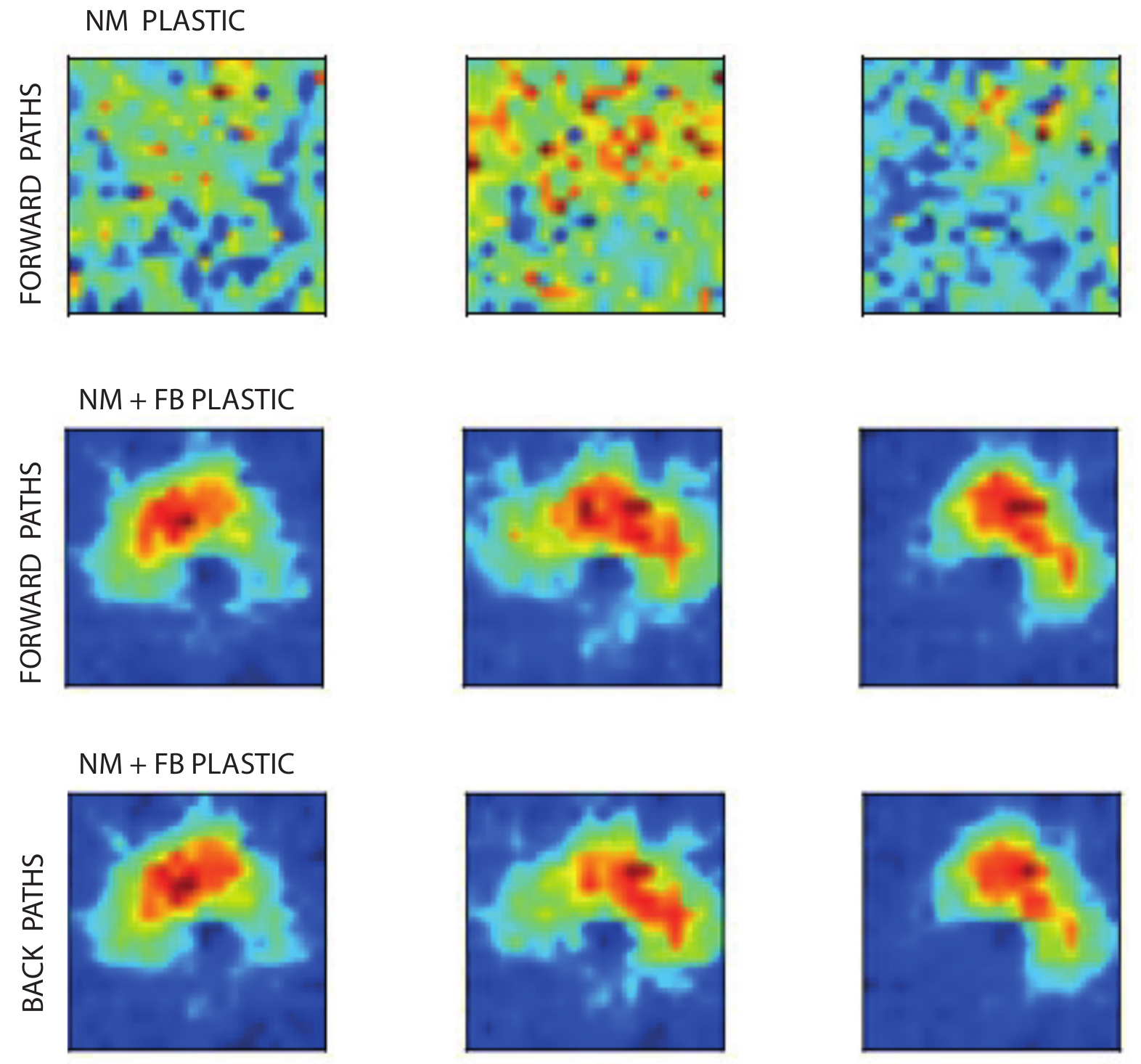}		
	\end{center}
	\vspace{-3mm}
  \caption{Average weights in Foveator network.
  }
  \label{f:fov} 
\end{figure}

{

}

\end{document}